\newcommand{\ignore}[1]{}
\newcommand{\norm}[1]{\left\Vert#1\right\Vert} 
\newcommand{\mc}[1]{\mathcal{#1}}
\newcommand{\bma}[1]{\left[\begin{array}{ #1}}
\newcommand{\ema}{\end{array}\right]}
\DeclareMathAlphabet{\mbf}{OT1}{ptm}{b}{n}
\newcommand{\mbs}[1]{{\boldsymbol{#1}}}
\newcommand{\mbfdot}[1]{{\dot{\mbf{#1}}}}
\newcommand{\mbfhat}[1]{{\hat{\mbf{#1}}}}
\def\fdotb{{\raisebox{-0.6ex}{ \kern0.2ex\raisebox{0.8ex}{\tiny $\hspace*{-1ex}\circ$}}}}
\def\fddotb{{\raisebox{-0.6ex}{ \kern0.2ex\raisebox{0.8ex}{\tiny $\hspace*{-1ex}\circ\circ$}}}}
\newcommand{\p}{\partial}
\newcommand{\f}{\frac}
\newcommand{\dee}{\textrm{d}}
\newcommand{\trans}{{\ensuremath{\mathsf{T}}}} 
\newcommand{\utimes}{ {\raisebox{-0.6ex}{ \kern-1.0ex\raisebox{0.6ex}{ \small $\mathsf{v}$}}} } %
\newcommand{\beq}{\begin{equation}}
\newcommand{\eeq}{\end{equation}}
\newcommand{\bdis}{\begin{displaymath}}
\newcommand{\edis}{\end{displaymath}}
\newcommand{\beqarray}{\begin{eqnarray}}
\newcommand{\eeqarray}{\end{eqnarray}}
\newcommand{\beqarraynn}{\begin{eqnarray*}}
\newcommand{\eeqarraynn}{\end{eqnarray*}}
\newcommand{\balign}{\begin{align}}
\newcommand{\ealign}{\end{align}}
\newcommand{\balignnn}{\begin{align*}}
\newcommand{\ealignnn}{\end{align}}
\renewcommand{\p@enumii}{\theenumi.}
\newtheorem{theorem}{Theorem}
\newtheorem{corollary}{Corollary}
\newtheorem{remark}{Remark}
\begin{document}
%
%
%
%
%
%
%
\def \myJournal {IEEE Robotics and Automation Letters}
\def \myDoi {10.1109/LRA.2021.3067253}
\def \myPaperSiteName {IEEE Xplore}
\def \myPaperSiteLink {https://ieeexplore.ieee.org/document/9381606}
\def \myYear {2021}
\def \myPaperCitation{M. Shalaby, C. C. Cossette, J. R. Forbes and J. Le Ny, ``Relative Position Estimation in Multi-Agent Systems Using Attitude-Coupled Range Measurements,'' in \textit{IEEE Robotics and Automation Letters}, vol. 6, no. 3, pp. 4955 - 4961, July 2021.}


\begin{figure*}[t]

\thispagestyle{empty}
\begin{center}
\begin{minipage}{6in}
\centering
This paper has been accepted for publication in \emph{\myJournal}. 
\vspace{1em}

This is the author's version of an article that has, or will be, published in this journal or conference. Changes were, or will be, made to this version by the publisher prior to publication.
\vspace{2em}

\begin{tabular}{rl}
DOI: & \myDoi\\
\myPaperSiteName: & \texttt{\myPaperSiteLink}
\end{tabular}

\vspace{2em}
Please cite this paper as:

\myPaperCitation

\vspace{15cm}
\copyright \myYear \hspace{4pt}IEEE. Personal use of this material is permitted. Permission from IEEE must be obtained for all other uses, in any current or future media, including reprinting/republishing this material for advertising or promotional purposes, creating new collective works, for resale or redistribution to servers or lists, or reuse of any copyrighted component of this work in other works.

\end{minipage}
\end{center}
\end{figure*}
\newpage
\clearpage
\pagenumbering{arabic} 

%
\title{Relative Position Estimation in Multi-Agent Systems Using Attitude-Coupled Range Measurements}
%
%
%

\author{Mohammed Shalaby$^{1}$, Charles Champagne Cossette$^{1}$, James Richard Forbes$^{1}$, and Jerome Le Ny$^{2}$%
\thanks{Manuscript received: Oct. 15th, 2020; Revised Jan. 15th, 2021; Accepted Feb. 19th, 2021.}
\thanks{This paper was recommended for publication by Editor M. Ani Hsieh upon evaluation of the Associate Editor and Reviewers' comments.
This work was supported by FRQNT under grant 2018-PR-253646, the William Dawson Scholar program, the NSERC Discovery Grant program, and the CFI JELF program.} 
\thanks{$^{1}$M. Shalaby, C. C. Cossette, and J. R. Forbes are with the department of Mechanical Engineering, McGill University, Montreal, QC H3A 0C3, Canada. {\tt\footnotesize mohammed.shalaby@mail.mcgill.ca, charles.cossette@mail.mcgill.ca, james.richard.forbes@mcgill.ca.}}%
\thanks{$^{2}$J. Le Ny is with the department of Electrical Engineering, Polytechnique Montreal, Montreal, QC H3T 1J4, Canada. {\tt\footnotesize jerome.le-ny@polymtl.ca.}}
\thanks{Digital Object Identifier (DOI): see top of this page.}
}

%
%

\markboth{IEEE Robotics and Automation Letters. Preprint Version. Accepted February, 2021}
{Shalaby \MakeLowercase{\textit{et al.}}: Relative Position Estimation in Multi-Agent Systems Using Attitude-Coupled Range Measurements} 
%



\maketitle

\begin{abstract}
The ability to accurately estimate the position of robotic agents relative to one another, in possibly GPS-denied environments, is crucial to execute collaborative tasks. Inter-agent range measurements are available at a low cost, due to technologies such as ultra-wideband radio. However, the task of three-dimensional relative position estimation using range measurements in multi-agent systems suffers from unobservabilities. This letter presents a sufficient condition for the observability of the relative positions, and satisfies the condition using a simple framework with only range measurements, an accelerometer, a rate gyro, and a magnetometer. The framework has been tested in simulation and in experiments, where 40-50 cm positioning accuracy is achieved using inexpensive off-the-shelf hardware.
\end{abstract}

\begin{IEEEkeywords}
Localization, Multi-Robot Systems, Swarm Robotics.
\end{IEEEkeywords}

%
\IEEEpeerreviewmaketitle

\section{Introduction}

\IEEEPARstart{R}{elative} positioning is a key requirement in many multi-robot applications, such as formation control, collision avoidance, and collaborative simultaneous localization and mapping (SLAM).
Range measurements provide a means to acquire inter-robot distance information. Range sensors are particularly attractive as they are generally inexpensive, light, computationally simple, and can be used in GPS-denied environments. This reduces the physical and computational requirements of the agents, and allows the deployment of large swarms of small, inexpensive robots in indoor or underground environments.

Estimating three-dimensional relative positions using range measurements is a non-trivial task, as there is an infinite number of possible solutions given a single range measurement. This unobservability arises from the fact that any group of agents can be collectively rotated in three-dimensional space while maintaining constant inter-agent distances, as no bearing information is available. There exist a multitude of approaches that attempt to fuse the range measurements with additional information to achieve an observable problem.

\begin{figure}
    \centering
    \includegraphics[width=0.9\columnwidth]{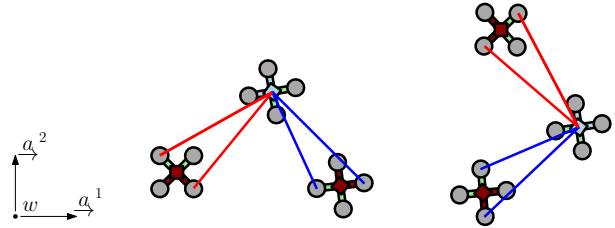}
    \caption{A schematic of the two-tag agent framework, where the red agents possess two ranging \emph{tags}, and are referred to as two-tag agents, and the blue agent is a single-tag agent.}
    \label{fig:twoTagAgents}
\end{figure}

Most indoor localization approaches traditionally assume the existence of an infrastructure of 4 or more \emph{anchors} with known positions \cite{Mueller2015a, Cano2019, Mai2018}. However, the problem of relative localization has also been addressed in the presence of a single anchor. In \cite{Cao2020}, only a single anchor is used for position tracking, with a constant velocity assumption, while a range-based SLAM approach is utilized for relative localization in \cite{Lourenco2015} and \cite{Cao2020SLAM}. Other approaches that eliminate the need for anchors in two-dimensional relative position estimation include \cite{Nguyen2019} and \cite{VanderHelm2018}, which assume displacement measurements are available using an optical flow sensor. In \cite{cossette2020}, a sliding window filter is able to estimate the three-dimensional relative position between two agents using just single-range measurements and 9-axis inertial measurement units (IMUs). All these single range-based localization approaches usually require persistent relative motion between the anchor and the agent \cite{Cao2020,Lourenco2015,Cao2020SLAM} or the two agents \cite{Nguyen2019,VanderHelm2018,cossette2020}, as outlined in \cite{Batista2011a}. In the presence of many agents, \cite{Williams2015} and \cite{Heintzman2020} show how the nonlinear observability matrix associated with a two-dimensional relative localization problem is dependent on both the rigidity matrix and the relative motion of the agents. Alternative approaches include the implementation of a particle filter when only an IMU and range measurements are available, as in \cite{Liu2017}, which is computationally expensive.

More recently, the idea of using multi-tag agents, as shown in Fig. \ref{fig:twoTagAgents}, has been proposed. In \cite{Richardson2010}, ultra-wideband (UWB) range sensors are used for relative positioning of trucks fitted with two tags, and in \cite{Guler2019}, an agent is equipped with three tags. Both these methods extract two-dimensional relative position information in the body frame of the computing agent. In \cite{Hepp2016a}, the users were capable of tracking a person in two dimensions using a special ranging protocol with 4 tags on an agent, while in \cite{Nguyen2019Platform}, multiple tags on a moving platform allow an agent to approach using range and relative displacement measurements, and land using vision and range measurements. Lastly, in \cite{Nguyen2018}, the use of two two-tag agents is coupled with an altimeter and optical flow velocity measurements for relative localization, and the results are validated in an experiment with limited motion. The main limitation of the results in \cite{Richardson2010, Guler2019, Hepp2016a, Nguyen2019Platform, Nguyen2018} is that the analysis is mainly restricted to only two agents, and no observability analysis is considered.

The contributions of this letter are threefold. The first contribution is a rigidity theory-based observability analysis for any number of agents, where a sufficient condition that is independent of the relative motion of the agents is derived for the observability of the three-dimensional relative positions when only range measurements are available. This motivates the second contribution, which is an extension of the two-tag framework to multi-agent systems that allows the estimation of three-dimensional relative positions using just the range measurements and a low-cost 9-axis IMU, for any number of agents, provided at least two two-tag agents are present. This framework is shown to be \emph{instantaneously locally observable}, as per the sufficient condition, which means that the system is locally observable at any given point in time without any specific trajectory requirements. Lastly, the performance of this framework in simulation and in experiment using multiple agents equipped with inexpensive sensors is presented.

The remainder of this letter is organized as follows. Graph theoretic and observability concepts are reviewed in Section \ref{sec:rigidy_observability}. A sufficient condition for observability of a three-dimensional relative localization problem is addressed in Section \ref{sec:suff_cond}. The two-tag framework is discussed in Section \ref{sec:proposed} and is validated in simulation and in experiment in Sections \ref{sec:sim} and \ref{sec:exp}, respectively.

\section{Rigidity Theory and Instantaneous Local Observability} \label{sec:rigidy_observability}

This letter uses graph theory as one of the fundamental tools for observability analysis.
Consider an \emph{undirected graph} $G = (\mathcal{V}, \mathcal{E})$ consisting of a set of $n$ vertices and $m$ edges, representing the $n$ tags and $m$ distance measurements, respectively. As such, let $\mbf{r}_a^{p_i p_1} \in \mathbb{R}^3$ represent the unknown location of tag $i$ relative to tag 1, resolved in the $3$-dimensional reference frame $\mathcal{F}_a$. Additionally, let $y_{ij}(\mbf{r}_a^{p_i p_1}, \mbf{r}_a^{p_j p_1}) \in \mathbb{R}$, $(i,j) \in \mathcal{E}$ be the range measurement between tags $i$ and $j$, where $(i,i) \notin \mathcal{E}$, $\forall i \in \mathcal{V}$. All graphs defined in this letter are assumed to have this property. Define a column matrix 
\begin{equation}
\mbf{x} \triangleq \left[ \left(\mbf{r}_a^{p_2 p_1}\right)^\trans \hspace{3pt} \cdots \hspace{3pt} \left( \mbf{r}_a^{p_n p_1} \right)^\trans \right]^\trans \in \mathbb{R}^{3(n-1)} \label{eq:x}
\end{equation}
of the $n-1$ relative positions, and a column matrix $\mbf{y}(\mbf{x}) \in \mathbb{R}^m$ of all the known measurements $y_{ij}$.

Parametrize the vector space spanned by the system state by a new arbitrary variable $t$. To analyze the behaviour of the measurements $\mbf{y}(\mbf{x}(t))$ for any infinitesimal change in the states $\mbf{x}(t)$ with respect to $t$, the derivative
\begin{equation}
    \label{eq:rigidity_derivation}
    \f{\dee \mbf{y}(\mbf{x}(t))}{\dee t} = \f{\p \mbf{y} (\mbf{x}(t))}{\p \mbf{x}} \f{\dee \mbf{x}(t)}{\dee t} \triangleq \mbf{R} \mbfdot{x}(t)
\end{equation}
is computed, where $\mbf{R} \in \mathbb{R}^{m \times 3(n-1)}$ is the \emph{rigidity matrix}. 

Traditionally, rigidity theory in $\mathbb{R}^3$ is concerned with the notion of \emph{infinitesimal rigidity} by achieving $\operatorname{rank} \hspace{1pt} \mbf{R} = 3n-6$ for $n$ absolute position states, where the 6 degrees of freedom are associated with the translations and rotations of the graph as a whole \cite{asimow1979}, as shown in Fig. \ref{fig:uwbTransRot}. Additionally, when dealing with $n-1$ three-dimensional relative states as in \eqref{eq:rigidity_derivation}, infinitesimal rigidity is also achieved when $\operatorname{rank} \hspace{1pt} \mbf{R} = 3(n-1) - 3 = 3n-6$, where the 3 degrees of freedom are associated with the rotations of the graph as a whole.

\begin{figure}
    \centering
    \includegraphics[width=0.9\columnwidth]{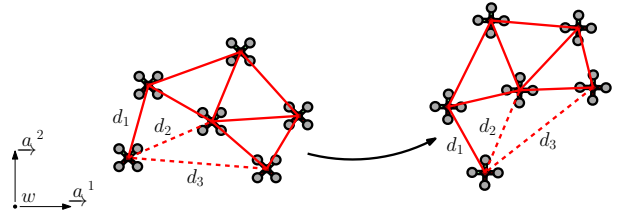}
    \caption{An example of a rigid graph. However, when edges $d_2$ and $d_3$ are removed, the graph becomes \emph{flexible}, as edge $d_1$ can rotate about its upper vertex without breaking any constraints. Additionally, both the rigid and the flexible graphs can translate and rotate without breaking any constraints.}
    \label{fig:uwbTransRot}
\end{figure}

When addressing relative position states, the only non-trivial solutions to $\mbf{R} \mbfdot{x} = \mbf{0}$ of an infinitesimally rigid graph are of the form
\begin{equation}
    \f{\dee}{\dee t}\mbf{r}_a^{p_i p_1} = \mbs{\omega}^\times \mbf{r}_a^{p_i p_1} = - \left(\mbf{r}_a^{p_i p_1}\right)^\times \mbs{\omega}, \quad \forall i \in \mathcal{V} \backslash \{1\}, \label{eq:trans_theorem}
\end{equation}
where $\mbs{\omega}$ denotes a common overall angular velocity of the graph and $(\cdot)^\times$ denotes the skew-symmetric cross product matrix operator in $\mathbb{R}^3$. Without loss of generality, the linearly independent canonical basis vectors $\mbf{e}_i$ of $\mathbb{R}^3$ are chosen as a basis for $\mbs{\omega}$. Therefore, from \eqref{eq:trans_theorem}, the null space of the rigidity matrix of an infinitesimally rigid graph is 
\begin{equation}
    \operatorname{null} (\mbf{R}) = \operatorname{span} \left\{ \mbf{v}_1, \mbf{v}_2, \mbf{v}_3 \right\} \label{eq:rigid_null},
\end{equation}
where
\begin{equation}
    \mbf{v}_i \triangleq \left(\begin{array}{c}
        - \left(\mbf{r}_a^{p_2 p_1}\right)^\times \mbf{e}_i \\
        \vdots \\
        - \left(\mbf{r}_a^{p_n p_1}\right)^\times \mbf{e}_i
    \end{array}\right) \in \mathbb{R}^{3(n-1)}.
\end{equation}
Note that the invariance of the measurements due to common translational motion is a trivial solution when dealing with relative states, since the relative position states are also invariant to common translational motion, meaning $\mbfdot{x}=\mbf{0}$.

A more stringent condition as compared to infinitesimal rigidity is instantaneous local observability, which requires that there is no local trajectory of the states $\mbf{x}$ at any instant in time, excluding the trivial trajectory $\mbfdot{x}=\mbf{0}$, that results in no change in the measurements \cite[Section 6.1]{Hermes1968}. Consequently, as per \eqref{eq:rigidity_derivation}, a system of $n$ agents consisting of $n-1$ relative position states is instantaneously locally observable if $\mbf{R}$ is full rank, that is if $\operatorname{rank} \hspace{1pt} \mbf{R} = 3(n-1)$.

The aim of this work is to disambiguate the aforementioned 3 degrees of freedom corresponding to rotations of the graph as a whole, thus achieving instantaneous local observability for the relative localization problem. In the remainder of this letter, the term \emph{local observability} is used to refer to instantaneous local observability for conciseness.

\section{Sufficient Condition for Local Observability} \label{sec:suff_cond}

Consider a group of $n > 3$ ranging tags navigating 3-dimensional space. The $n-1$ relative position vectors $\mbf{r}_a^{p_i p_1}$, $i = 2,\ldots,n$, are considered. The $1^\text{st}$ tag takes the role of the arbitrary reference point, and is referred to as the \emph{reference tag}. The position between any two tags can be computed using these $n-1$ position vectors relative to the reference tag. Additionally, there is no loss of generality in assuming the $1^\text{st}$ tag as the reference tag, since in practice any tag can be set as the reference tag.

Let $G = (\mathcal{V},\mathcal{E})$ be an infinitesimally rigid undirected graph representing the interconnection topology of the sensor network consisting of the $n$ tags, where the edges represent the distance measurements between pairs of tags. As is, the range measurements are invariant to translations and rotations of the group of tags as a whole, as shown in Fig. \ref{fig:uwbTransRot}, while the relative position states are invariant to the translations only.

\begin{theorem} \label{th:suff_cond}
Consider an infinitesimally rigid undirected graph $G(\mathcal{V},\mathcal{E})$ and its rigidity matrix $\mbf{R}$, where the state vector consists of $n-1$ relative position vectors $\mbf{r}_a^{p_i p_1}$, ${\forall i \in \mathcal{V} \backslash \{1\}}$, and the edges represent range measurements. The graph $\tilde{G}(\mathcal{V}, \tilde{\mathcal{E}})$ constructed from $G$ with two extra edges representing the direct measurement of two linearly independent relative position vectors ${\mbf{r}_a^{p_j p_1}, \mbf{r}_a^{p_\ell p_1} \in \mathbb{R}^3 \backslash \{\mbf{0}\}}$, $j,\ell \in \mathcal{V}\backslash \{1\}$ corresponds to a locally observable system.
\end{theorem}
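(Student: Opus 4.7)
The plan is to show that adding the two direct relative-position measurements eliminates the three-dimensional rotational ambiguity that is the only residual degree of freedom in an infinitesimally rigid graph, thereby forcing the augmented measurement Jacobian to be full column rank. Since local observability is defined via full column rank of the measurement Jacobian, this will directly yield the claim.

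First, I would set up the augmented Jacobian $\tilde{\mbf{R}}$ associated with $\tilde{G}$. The first block of rows is the original rigidity matrix $\mbf{R}$ of $G$. The two new edges represent direct measurements of the three-dimensional vectors $\mbf{r}_a^{p_j p_1}$ and $\mbf{r}_a^{p_\ell p_1}$, so each contributes three rows to the measurement Jacobian, namely rows of the form $[\mbf{0} \cdots \mbf{1} \cdots \mbf{0}]$ (identity block in the slot of the measured state, zeros elsewhere). Thus $\tilde{\mbf{R}} \in \mathbb{R}^{(m+6) \times 3(n-1)}$.

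Next, I would argue about the null space. Take any $\mbfdot{\mbf{x}} \in \operatorname{null}(\tilde{\mbf{R}})$. Since $\mbfdot{\mbf{x}} \in \operatorname{null}(\mbf{R})$ and $G$ is infinitesimally rigid, the characterization in \eqref{eq:rigid_null} gives $\mbfdot{\mbf{x}} = \alpha_1 \mbf{v}_1 + \alpha_2 \mbf{v}_2 + \alpha_3 \mbf{v}_3$ for some scalars $\alpha_i$, which by \eqref{eq:trans_theorem} is equivalent to the existence of a common angular velocity $\mbs{\omega} = \alpha_1 \mbf{e}_1 + \alpha_2 \mbf{e}_2 + \alpha_3 \mbf{e}_3$ satisfying $\tfrac{\dee}{\dee t}\mbf{r}_a^{p_i p_1} = -\bigl(\mbf{r}_a^{p_i p_1}\bigr)^\times \mbs{\omega}$ for every $i \in \mathcal{V}\backslash \{1\}$. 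The two extra rows of $\tilde{\mbf{R}}$ impose the additional constraints
\begin{equation}
    -\bigl(\mbf{r}_a^{p_j p_1}\bigr)^\times \mbs{\omega} = \mbf{0}, \qquad -\bigl(\mbf{r}_a^{p_\ell p_1}\bigr)^\times \mbs{\omega} = \mbf{0}.
\end{equation}

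The main step, and the only subtle one, is now to conclude that $\mbs{\omega} = \mbf{0}$. The two constraints above say that $\mbs{\omega}$ is collinear with each of $\mbf{r}_a^{p_j p_1}$ and $\mbf{r}_a^{p_\ell p_1}$. Because these two relative position vectors are assumed to be linearly independent (and nonzero), the only vector collinear with both simultaneously is the zero vector. Hence $\mbs{\omega} = \mbf{0}$, which in turn forces $\mbfdot{\mbf{x}} = \mbf{0}$, so $\operatorname{null}(\tilde{\mbf{R}}) = \{\mbf{0}\}$ and $\operatorname{rank}\tilde{\mbf{R}} = 3(n-1)$. Finally, invoking the criterion from Section~\ref{sec:rigidy_observability} that full column rank of the measurement Jacobian is equivalent to instantaneous local observability completes the proof. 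The potential difficulty I would flag is being careful that the canonical-basis decomposition of $\mbs{\omega}$ used to define $\mbf{v}_1, \mbf{v}_2, \mbf{v}_3$ is genuinely valid as a basis for all admissible directions of ambiguity; this is exactly where the linear independence hypothesis on $\mbf{r}_a^{p_j p_1}$ and $\mbf{r}_a^{p_\ell p_1}$ is used, and it must be invoked without further geometric assumptions on the remaining tags.
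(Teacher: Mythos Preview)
Your proposal is correct and follows essentially the same approach as the paper: form the augmented Jacobian by stacking $\mbf{R}$ with the selection rows for $\mbf{r}_a^{p_j p_1}$ and $\mbf{r}_a^{p_\ell p_1}$, restrict attention to $\operatorname{null}(\mbf{R})$ via the characterization \eqref{eq:rigid_null}, and then use the linear-independence hypothesis to kill the remaining rotational degree of freedom through the two cross-product conditions. The paper's argument is phrased in terms of a coefficient vector $\mbf{a}$ rather than the angular velocity $\mbs{\omega}$, but these are the same object, and the collinearity contradiction is identical.
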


\begin{proof}
Given the infinitesimal rigidity assumption on $G$, the null space of the rigidity matrix $\mbf{R}$ is as defined in \eqref{eq:rigid_null}. 
Local observability as discussed in Section~\ref{sec:rigidy_observability} requires that $\mbf{R}\mbfdot{x} = \mbf{0}$ if and only if $\mbfdot{x} = \mbf{0}$. Therefore, new knowledge should modify $\mbf{R}$ to generate a new rigidity matrix $\tilde{\mbf{R}}$, such that
\begin{equation}
    \tilde{\mbf{R}} \mbf{z} \neq \mbf{0}, \quad \forall\mbf{z} \in \operatorname{null} \mbf{R} \backslash \{\mbf{0}\}. \label{eq:required_inequality}
\end{equation}
Consider the system corresponding to the graph $\tilde{G}$, where two relative position vectors $\mbf{r}_a^{p_j p_1}$ and $\mbf{r}_a^{p_\ell p_1}$ are measured. The rigidity matrix $\tilde{\mbf{R}}$ is then of the form
\begin{equation}
    \tilde{\mbf{R}} = \left[ \begin{array}{cc}
        \mbf{R}_1^\trans & \mbf{R}^\trans
    \end{array} \right]^\trans,
\end{equation}
where $\mbf{R}_1 \in \mathbb{R}^{6 \times 3(n-1)}$ is the permutation matrix that extracts the measured relative positions $\mbf{r}^{p_j p_1}_a, \; \mbf{r}^{p_\ell p_1}_a$ from $\mbf{x}$, and $\mbf{x}$ is defined as per \eqref{eq:x}. That is,
\begin{equation}
     \left[ \begin{array}{c}
         \mbf{r}_a^{p_j p_1} \\
         \mbf{r}_a^{p_\ell p_1}
    \end{array} \right] = \mbf{R}_1 \mbf{x}.
\end{equation} 
Given that $\mbf{R}\mbf{z} = \mbf{0}$ by the definition of $\mbf{z}$, it is sufficient to show that $\mbf{R}_1 \mbf{z} \neq \mbf{0}$ to achieve \eqref{eq:required_inequality}. This can be rewritten as
\begin{equation}
    \mbf{R}_1 \left[ \begin{array}{ccc}
        \mbf{v}_1& \mbf{v}_2 & \mbf{v}_3 
    \end{array} \right] \mbf{a} \neq \mbf{0}, \quad \mbf{a} = \left[ \begin{array}{c}
        a_1  \\
        a_2 \\
        a_3
    \end{array} \right] \neq \mbf{0}, \label{eq:z_expanded}
\end{equation}
where $a_i$ represent arbitrary scalar parameters, since $\mbf{z}$ is a linear combination of the vectors $\mbf{v}_i$ as shown in \eqref{eq:rigid_null}. By replacing the matrix
\begin{equation}
    \mbf{R}_1 \left[ \begin{array}{ccc}
        \mbf{v}_1& \mbf{v}_2 & \mbf{v}_3 
    \end{array} \right] = \left[ \begin{array}{c}
        - \left(\mbf{r}_a^{p_j p_1}\right)^\times \\
        - \left(\mbf{r}_a^{p_\ell p_1}\right)^\times
    \end{array} \right] \label{eq:V_matrix}
\end{equation}
into \eqref{eq:z_expanded} and assuming that $\mbf{r}_a^{p_j p_1}, \mbf{r}_a^{p_\ell p_1} \neq \mbf{0}$, the expression in \eqref{eq:z_expanded} does not hold if and only if $\left(\mbf{r}_a^{p_j p_1}\right)^\times \mbf{a} =\mbf{0}$ and $\left(\mbf{r}_a^{p_\ell p_1}\right)^\times \mbf{a} =\mbf{0}$, which, due to a property of the cross product, necessitates that the vectors $\mbf{r}_a^{p_j p_1}$, $\mbf{r}_a^{p_\ell p_1}$, and $\mbf{a}$ be collinear. However, $\mbf{r}_a^{p_j p_1}$ and $\mbf{r}_a^{p_\ell p_1}$ are linearly independent and are non-zero by assumption.
Therefore, the system represented by the graph $\tilde{G}$ is locally observable.
\end{proof}

When a system is locally observable, all the relative position vectors are locally unique given the known measurements. Possible approaches to satisfying the minimum knowledge requirement specified by Theorem \ref{th:suff_cond} in multi-agent localization is to fit a small subset of the agents with a GPS or a stereo camera, in addition to the ranging tags. The GPS extracts the relative position information by subtracting the absolute position information, and stereo cameras can extract relative position information using attitude estimates and depth perception. To satisfy the linear independence assumption, agents fitted with a GPS must not lie in a straight line, and the relative position vectors between agents fitted with stereo cameras and the agents they detect must not be all collinear. An alternative approach that does not require additional hardware is discussed in the next section.

\section{Attitude-Coupled Range Measurements} \label{sec:proposed}

\subsection{Overview}

The problem of three-dimensional swarm navigation using an IMU and range measurements is discussed in this section. In the subsequent analysis, an IMU is assumed to consist of accelerometers, gyroscopes, and magnetometers. Typically, the attitude of each agent is observable using the IMU data, as is the case when using an attitude and heading reference system (AHRS) to estimate attitude \cite{farrell2008}, but the relative positions require further measurements, such as a GPS as discussed in the end of Section \ref{sec:suff_cond}. The standard way of utilizing range measurements is usually invariant to each agent's attitude, and thus having access to an agent's attitude provides no additional information regarding its instantaneous position. 

This section discusses an approach to couple the range measurements with the attitude estimates to satisfy the conditions of Theorem~\ref{th:suff_cond}. This allows the integration of an IMU with range measurements for attitude and relative position estimation of a swarm of robots. A key component of this approach is the use of \emph{two-tag agents}, which are agents equipped with two non-collocated ranging tags. The notation of $p_{i,j}$ is used for the point in space of the $j^\text{th}$ tag of Agent~$i$.

\subsection{Two-Tag Agents} \label{subsec:two_tags}

Rather than having one ranging tag on each agent, consider two two-tag agents as shown in Fig. \ref{fig:tt_agents}, while the remaining agents are single-tag agents. A single-tag agent is a conventional agent with only one ranging tag. The range measurement between the $p_{1,i}$ tag and the $p_{2,j}$ tag is then
\begin{align}
    y_{p_{1,i}p_{2,j}} = \norm{\mbf{r}_a^{z_2 z_1} + \mbf{C}_{ab_2} \mbf{r}_{b_2}^{p_{2,j} z_2} - \mbf{C}_{ab_1} \mbf{r}_{b_1}^{p_{1,i} z_1}} , \label{eq:tt_measurement}
\end{align}
where $\mbf{C}_{ab_i} \in SO(3)$ is the direction cosine matrix (DCM) representing the rotation from the body reference frame $\mathcal{F}_{b_i}$ to the absolute frame $\mathcal{F}_{a}$, $\mbf{r}_{b_2}^{p_{2,j} z_2}$ represents the known position of the $j^\text{th}$ tag of Agent 2 relative to the IMU at $z_2$, in the Agent~2 body frame $\mc{F}_{b_2}$, and $\mbf{r}_{b_1}^{p_{1,i} z_1}$ represents the known position of the $i^\text{th}$ tag of Agent 1 relative to the IMU at $z_1$, in the Agent~1 body frame $\mc{F}_{b_1}$. This shows the consequent coupling of the range measurements and the attitude of the agents when considering two-tag agents. The subsequent corollary follows from Theorem~\ref{th:suff_cond}, where it is assumed that for the $i^\text{th}$ two-tag agent the relative position vector $\mbf{r}_{b_i}^{p_{i,2} p_{i,1}} \in \mathbb{R}^3 \backslash \{\mbf{0}\}$ is known, being the position of the second tag relative to the first tag in the body frame $\mathcal{F}_{b_i}$.
\begin{corollary} \label{cor:tt_agents}
    Consider a swarm of $n_t \geq 2$ two-tag agents, each with known attitude, and $n_s$ single-tag agents. Assume that the undirected graph composed of the $2n_t+n_s$ vertices and the range measurements between the tags is infinitesimally rigid. Given that there are at least two two-tag agents $j$ and $\ell$ where $\mbf{r}_a^{p_{j,2}p_{j,1}}$ and $\mbf{r}_a^{p_{\ell,2}p_{\ell,1}}$ are linearly independent, the underlying system representing the relative localization problem is locally observable. 
\end{corollary}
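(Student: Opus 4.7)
The plan is to reduce Corollary~\ref{cor:tt_agents} to Theorem~\ref{th:suff_cond} by treating each two-tag agent with known attitude as effectively supplying a direct measurement of the relative position vector between its two tags. I would first choose an arbitrary tag as the reference (say $p_{1,1}$) and assemble the state vector $\mbf{x}$ of the $2n_t + n_s - 1$ relative positions of every other tag to this reference, analogously to~\eqref{eq:x}. By the assumed infinitesimal rigidity of the inter-tag range-measurement graph, the null space of the corresponding rigidity matrix $\mbf{R}$ is three-dimensional and spanned by $\{\mbf{v}_1,\mbf{v}_2,\mbf{v}_3\}$ as in~\eqref{eq:rigid_null}.

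Next, I would exploit the attitude-coupled measurement model of~\eqref{eq:tt_measurement}. Because $\mbf{C}_{ab_i}$ is known and $\mbf{r}_{b_i}^{p_{i,2}p_{i,1}}$ is a known body-frame constant, the inter-tag vector $\mbf{r}_a^{p_{i,2}p_{i,1}} = \mbf{C}_{ab_i}\mbf{r}_{b_i}^{p_{i,2}p_{i,1}}$ is effectively available for each two-tag agent~$i$. Applied to the two designated agents $j$ and $\ell$, this supplies two additional vector-valued measurements that can be appended to $\mbf{R}$ in precisely the manner used in the proof of Theorem~\ref{th:suff_cond}.

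The main subtlety, and the step I expect to require the most care, is that Theorem~\ref{th:suff_cond} is stated for extra measurements of positions relative to the reference tag $p_1$, whereas here the two measured vectors $\mbf{r}_a^{p_{j,2}p_{j,1}}$ and $\mbf{r}_a^{p_{\ell,2}p_{\ell,1}}$ connect two tags neither of which is, in general, the chosen reference. I would address this by writing each inter-tag vector as the difference of two reference-based relative positions and observing that for every ${\mbf{z} = a_1\mbf{v}_1 + a_2\mbf{v}_2 + a_3\mbf{v}_3 \in \operatorname{null}(\mbf{R})}$, the two induced infinitesimal perturbations telescope to $-(\mbf{r}_a^{p_{i,2}p_{i,1}})^\times \mbf{a}$. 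From there the cross-product argument from the proof of Theorem~\ref{th:suff_cond} carries over: the non-vanishing of each inter-tag vector (inherited from ${\mbf{r}_{b_i}^{p_{i,2}p_{i,1}} \neq \mbf{0}}$ via the norm-preserving property of $\mbf{C}_{ab_i}$), combined with the assumed linear independence of $\mbf{r}_a^{p_{j,2}p_{j,1}}$ and $\mbf{r}_a^{p_{\ell,2}p_{\ell,1}}$, forces $\mbf{a} = \mbf{0}$. Hence the augmented rigidity matrix has trivial null space and the system is locally observable.
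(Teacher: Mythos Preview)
Your approach is essentially the same as the paper's: use the known attitude $\mbf{C}_{ab_i}$ to convert the body-frame constant $\mbf{r}_{b_i}^{p_{i,2}p_{i,1}}$ into a known absolute-frame relative position $\mbf{r}_a^{p_{i,2}p_{i,1}} = \mbf{C}_{ab_i}\mbf{r}_{b_i}^{p_{i,2}p_{i,1}}$, and then invoke Theorem~\ref{th:suff_cond}. Your version is in fact more careful than the paper's own proof, which simply asserts that the hypotheses of Theorem~\ref{th:suff_cond} are met without addressing the subtlety you correctly flag---that the measured inter-tag vectors are not, in general, based at the reference tag~$p_1$; your telescoping argument showing that on $\operatorname{null}(\mbf{R})$ the induced constraint still collapses to $-(\mbf{r}_a^{p_{i,2}p_{i,1}})^\times\mbf{a}$ closes that gap cleanly.
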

\begin{proof}
    When the attitude of any two-tag agent $i$ is known, the vector $\mbf{r}_a^{p_{i,2} p_{i,1}}$ resembling the relative position vector between the two tags of agent $i$ in the absolute frame $\mathcal{F}_a$ is found through
    \begin{equation}
        \mbf{r}_a^{p_{i,2} p_{i,1}} = \mbf{C}_{ab_i} \mbf{r}_b^{p_{i,2} p_{i,1}}.
    \end{equation}
    Consequently, since at least two linearly independent relative position vectors are known in $\mathcal{F}_a$, the conditions of Theorem \ref{th:suff_cond} are satisfied. Hence, the system is locally observable.
\end{proof}

\begin{figure}
    \centering
    \includegraphics[trim=0.1cm 0.1cm 0 0, clip=true, width=0.85\columnwidth]{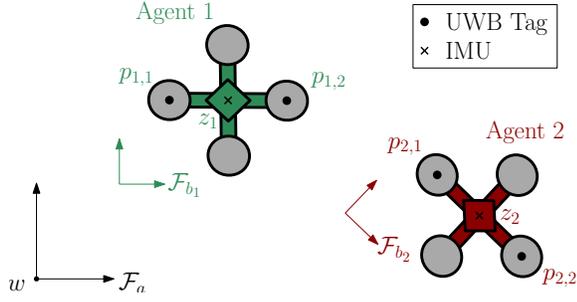}
    \caption{Two two-tag agents, with each agent having two ranging tags at $p_{1,1}$, $p_{1,2}$ and $p_{2,1}$, $p_{2,2}$, respectively, and an IMU at $z_1$ and $z_2$, respectively. The absolute frame is $\mathcal{F}_a$, and the body frame of the $i^\text{th}$ agent is $\mathcal{F}_{b_i}$.}
    \label{fig:tt_agents}
\end{figure}

\begin{remark}
Whenever a swarm of agents equipped with IMUs includes at least two two-tag agents, the problem of finding the relative position of the agents becomes observable using just range measurements within the swarm. However, this also assumes that the known relative position vectors are linearly independent. Therefore, if the two two-tag agents orient themselves such that the known relative position vectors are parallel, the system becomes unobservable. The designer must therefore place the tags in a strategic way to minimize the possibility of these occurrences based on the application. For example, quadcopters rarely go from level flight to a $90^\circ$ pitch or roll orientation, and by placing the two-tags vertically on one agent and horizontally on the other, it is unlikely that the system becomes unobservable. This issue is also mitigated when using more than two two-tag agents, or by fitting more than two tags on an agent, which however adds hardware and congestion on the UWB communication space.
\end{remark}

Note that the rigidity matrix only addresses whether or not ambiguities arise due to the graph being continuously deformable. Two types of ambiguities not considered by the notion of local observability are discontinuous flex ambiguities and flip ambiguities, as discussed in \cite{Hendrickson1992, Moore2004}. For the two-tag agents framework, the user must be aware of the possible occurrence of such ambiguities in the presence of attitude uncertainty, as shown in Figs. \ref{fig:flex_ambig_att_uncertainty} and~\ref{fig:flip_ambig_att_uncertainty}.

\begin{figure}
	\centering
	\begin{minipage}{0.35\columnwidth}
		\centering
        \includegraphics[width=0.9\textwidth]{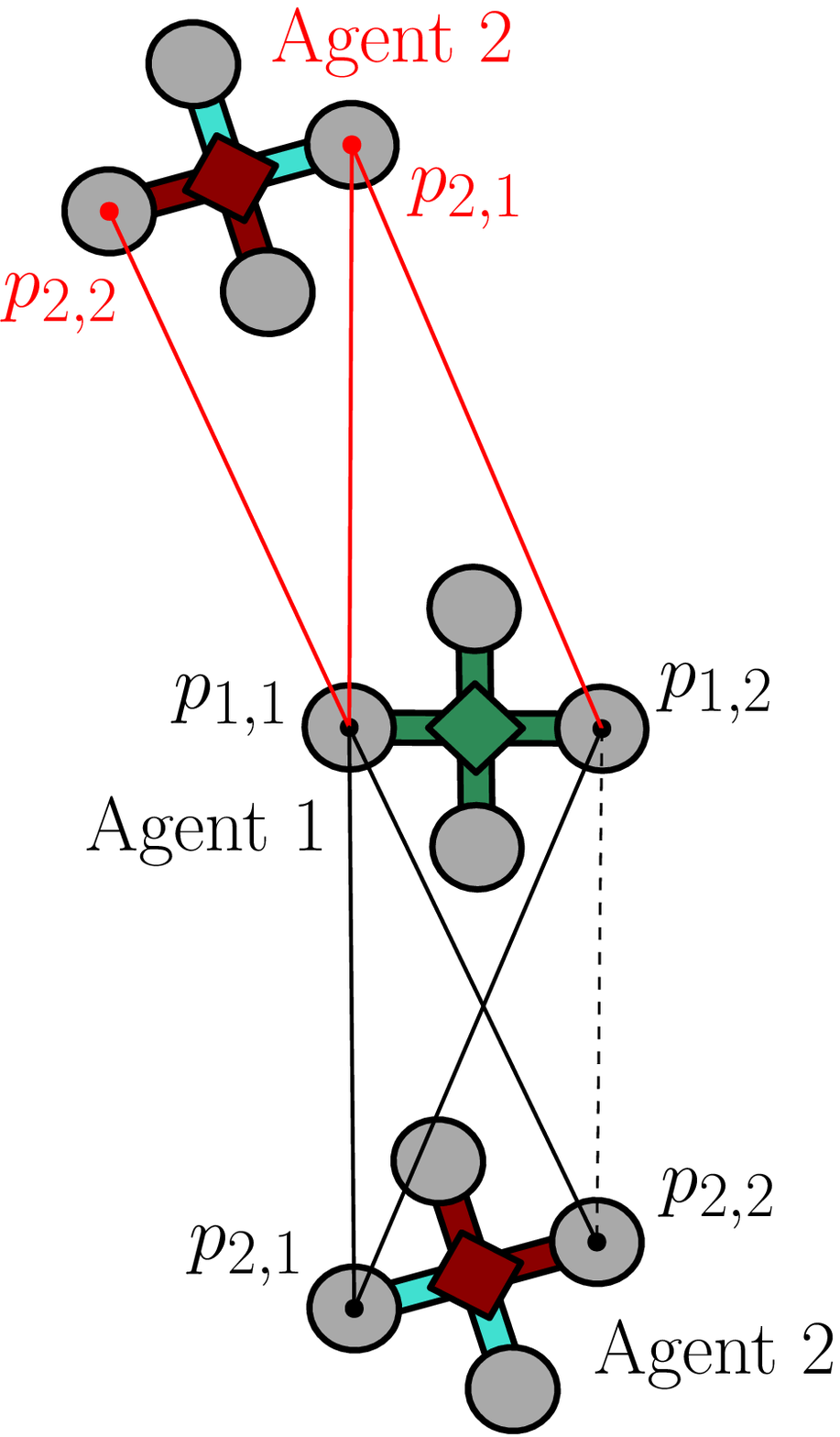}
        \caption{A flex ambiguity with two two-tag agents in $\mathbb{R}^2$. The flex ambiguity does not exist if the dashed edge exists, which might otherwise be assumed redundant in $\mathbb{R}^2$.}
        \label{fig:flex_ambig_att_uncertainty}
	\end{minipage}\hspace{10pt}%
	\begin{minipage}{0.59\columnwidth}
		\centering
        \includegraphics[width=0.99\textwidth]{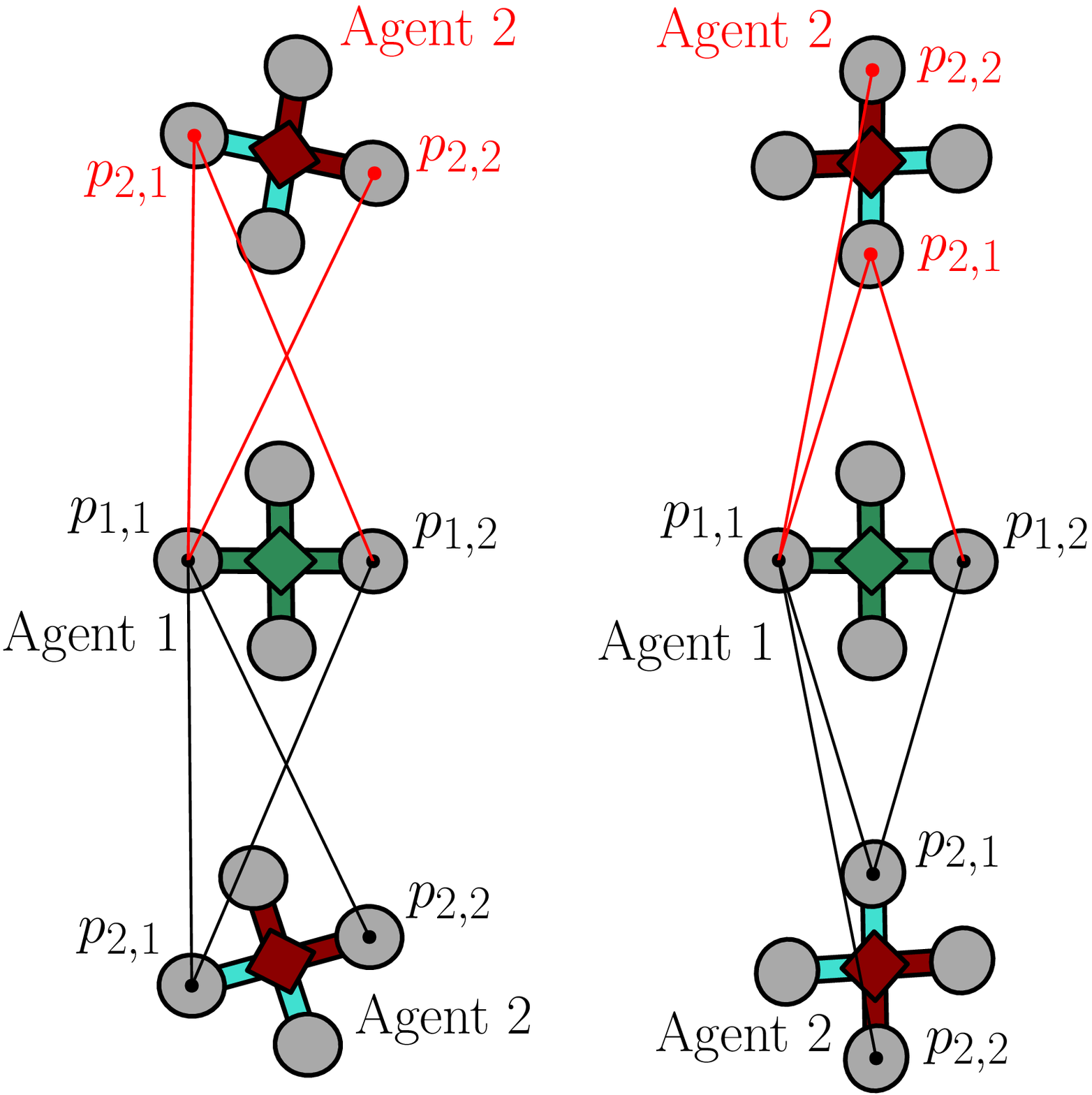}
        \caption{A flip ambiguity with two two-tag agents in $\mathbb{R}^2$. The more parallel the two agents
        are, the more likely that attitude uncertainty results in a flip ambiguity, since the attitude error resulting from the flip ambiguity is smaller.}
        \label{fig:flip_ambig_att_uncertainty}
	\end{minipage}
\end{figure}

\subsection{Relative Position and Attitude Estimator} \label{subsec:models}

\begin{figure*}
	\centering
	\begin{minipage}{0.4\textwidth}
		\centering
        \includegraphics[width=0.97\textwidth, ]{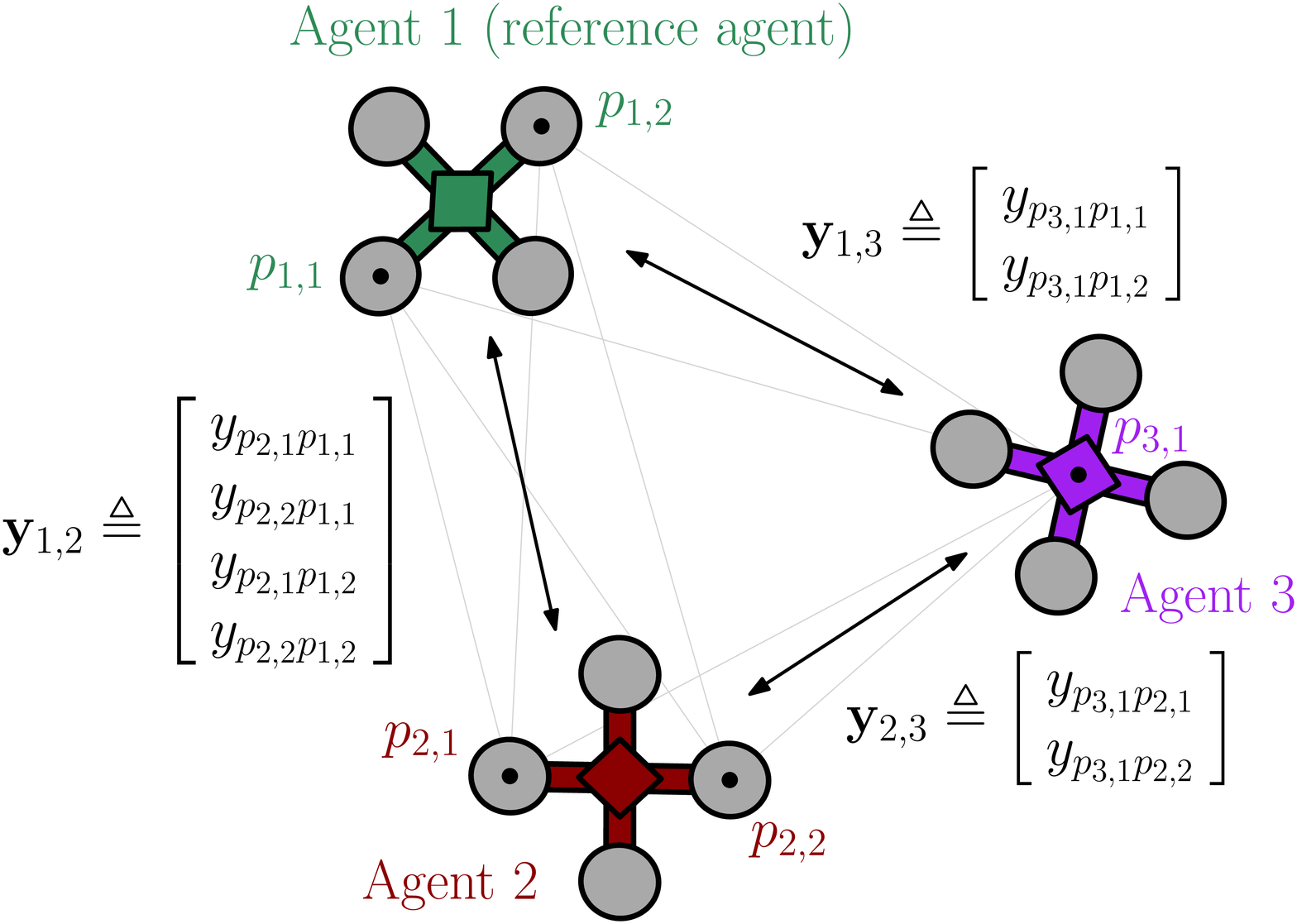}
	\end{minipage} \hspace{5pt} $\Longrightarrow$ \hspace{10pt}
	\begin{minipage}{0.4\textwidth}
		\centering
        \includegraphics[width=0.97\textwidth]{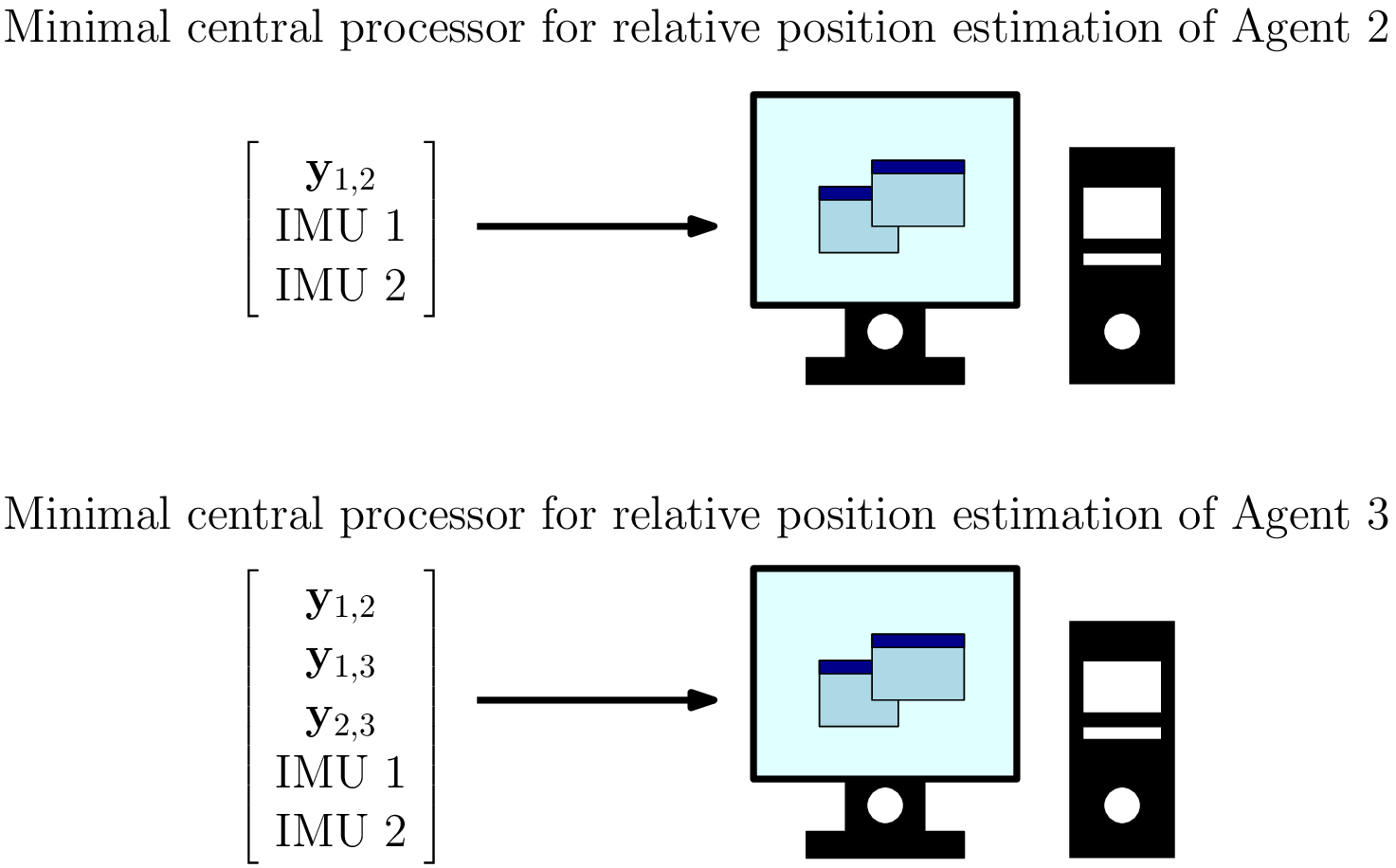}
	\end{minipage}
	\caption{A schematic representing the minimum information required for relative position estimation of two-tag Agent 2 and single-tag Agent 3 relative to the two-tag reference agent Agent 1, in a centralized framework. The tag locations are shown using the black markers, and the light grey lines in the background represent all the range measurements between the ranging tags. The notation $\mbf{y}_{i,j}$ is used to denote all the range measurements between agents $i$ and $j$. As per Corollary \ref{cor:tt_agents}, the relative position vectors $r_{b_1}^{p_{1,2}p_{1,1}}$ and $r_{b_2}^{p_{2,2}p_{2,1}}$ are assumed to be known. In the minimal central processors, note that the IMU of agents 1 and 2 are required as the relative localization problem is dependant on estimating their attitude, while the IMU of Agent 3 is neglected as the attitude of Agent 3 is irrelevant. However, filtering approaches do require the IMU of Agent 3 as well for prediction.}
        \label{fig:decentralized}
\end{figure*}


Corollary \ref{cor:tt_agents} requires the presence of at least two-tag agents to achieve local observability. Therefore, any single-tag agent needs to communicate with at least two two-tag agents, and any two-tag agent needs to communicate with at least one other two-tag agent. Additionally, each two-tag agent $i$ must compute $\mbfhat{r}_a^{p_{i,2} p_{i,1}}$ based on its attitude estimate $\mbfhat{C}_{ab_i}$ and the known vector $\mbf{r}_{b_i}^{p_{i,2} p_{i,1}}$. Therefore, by satisfying these minimum ranging conditions, and with IMU measurements on two-tag agents for attitude estimation, any agent can estimate its relative position in a framework similar to the one shown in Fig. \ref{fig:decentralized}. The relative position estimator can just be a simple nonlinear least squares algorithm, or a more complex filtering algorithm. In what follows, a centralized framework is considered to demonstrate the use of $n_t \geq 2$ two-tag agents and $n_s$ single-tag agents for relative positioning, and decentralization is reserved for future work.

The centralized relative position and attitude estimation problem for a swarm of $N = n_t + n_s$ agents considered herein involves estimating the state vector 
\begin{equation}
    \mbf{x}(t) = \left[ \begin{array}{c}
        \mbf{r}_a^{z_2 z_1} (t)  \\
        \vdots \\
        \mbf{r}_a^{z_{N} z_1} (t) \\
        \mbf{v}_a^{z_2 z_1 } (t) \\
        \vdots \\
        \mbf{v}_a^{z_{N} z_1 } (t) \\
        \mbs{\phi}_1 (t) \\
        \vdots \\
        \mbs{\phi}_{N} (t)
    \end{array} \right] \in \mathbb{R}^{6(N-1) + 3N}, \label{eq:state_vec}
\end{equation}
where $\mbf{v}_a^{z_i z_1} \in \mathbb{R}^3$ is the velocity of the IMU of Agent $i$ relative to the IMU of the reference agent Agent $1$ with respect to $\mathcal{F}_a$, resolved in $\mathcal{F}_a$, and $\mbs{\phi}_i \in \mathbb{R}^3$ is the rotation vector associated with the DCM of agent $i$. Note that this approach involves the estimation of the attitude of the $n_s$ single-tag agents as well.

Let $\mbf{u}_{b_i}^{\text{acc}}, \mbf{u}_{b_i}^{\text{gyr}} \in \mathbb{R}^3$ denote the accelerometer and gyroscope readings of agent $i$, respectively. The process model of the relative states of agent $i$ is then modelled as 
\begin{align}
    \mbfdot{r}_a^{z_iz_1}(t) &= \mbf{v}_a^{z_iz_1}(t), \label{eq:process_first}\\
    \mbfdot{v}_a^{z_iz_1}(t) &= \mbf{C}_{ab_i}(t) \left( \mbf{u}_{b_i}^{\text{acc}}(t) + \mbf{w}_{b_i}^{\text{acc}}(t) \right) \\
    &\hspace{15pt} - \mbf{C}_{ab_1}(t) \left( \mbf{u}_{b_1}^{\text{acc}}(t) + \mbf{w}_{b_1}^{\text{acc}}(t) \right),
\end{align}
where $\mbf{w}_{b_i}^{\text{acc}} \in \mathbb{R}^3$ denotes the white Gaussian noise associated with the accelerometer measurement of the $i^\text{th}$ agent. The process model of Agent $i$'s attitude is modelled as 
\begin{equation}
    \mbfdot{C}_{ab_i}(t) = \mbf{C}_{ab_i}(t) \left( \mbf{u}_{b_i}^\text{gyr}(t) + \mbf{w}_{b_i}^\text{gyr}(t) \right)^\times, \label{eq:process_end}
\end{equation}
where $\mbf{w}_{b_i}^{\text{gyr}} \in \mathbb{R}^3$ denotes the white Gaussian noise associated with the gyroscope measurement of the $i^\text{th}$ agent, and as before, the $(\cdot)^\times$ denotes the skew-symmetric cross product matrix operator in $\mathbb{R}^3$

To estimate the state vector \eqref{eq:state_vec} in a centralized framework, all agents communicate their measurements to a master agent, which could be any of the $N$ agents, along with noisy range measurements between the $i^\text{th}$ tag of Agent $k$ and the $j^\text{th}$ tag of Agent $\ell$ of the form
\begin{align}
    y_{p_{k,i}p_{\ell,j}} &=  \norm{ \left( \mbf{r}_{a}^{p_{\ell,j} z_\ell} - \mbf{r}_{a}^{p_{k,i} z_k} \right)} + \nu_{p_{k,i}p_{\ell,j}}, \label{eq:measurement_first}
\end{align}
where $\nu_{p_{k,i}p_{\ell,j}} \in \mathbb{R}$ represents the white Gaussian noise associated with the range measurement $y_{p_{k,i}p_{\ell,j}}$. In addition to the range measurements, accelerometer aiding \cite{farrell2008} and magnetometer measurements are implemented to correct attitude drift.

The process models \eqref{eq:process_first}-\eqref{eq:process_end} are discretized using a forward Euler discretization scheme, and the process models and measurement models are linearized using a first-order Taylor series approximation. A centralized multiplicative extended Kalman filter (MEKF) in the spirit of \cite{farrell2008} is then designed and evaluated in simulation in Section \ref{sec:sim}, and in an experiment in Section~\ref{sec:exp}.

\section{Simulation Results} \label{sec:sim}

\begin{table}
\renewcommand{\arraystretch}{1.1}
\caption{Simulation parameters used in the Monte Carlo trials.}
\label{tab:sim_params}
\centering
\begin{tabular}{|c|c|}
\hline
\bfseries Specification & \bfseries Value\\
\hline
Accelerometer std. dev. (m/s$^2$) & 0.026 \\
Gyroscope std. dev. (rad/s) & 0.0025 \\
Magnetometer std. dev. ($\mu$F) & 0.85 \\
UWB std. dev. (m) & 0.1 \\
IMU rate (Hz) & 100 \\
UWB rate (Hz) & 20 \\
No. of UWB freq. channels & 3 \\
Initial relative position std dev. (m) & 0.45 \\
Initial relative velocity std dev. (m/s) & 0.45 \\
Initial attitude std dev. (rad) & 0.1 \\
\hline
\end{tabular}
\end{table}

\begin{figure*}
    \centering
    \includegraphics[trim=4cm 0.8cm 3.5cm 0cm, clip=true, width=\linewidth]{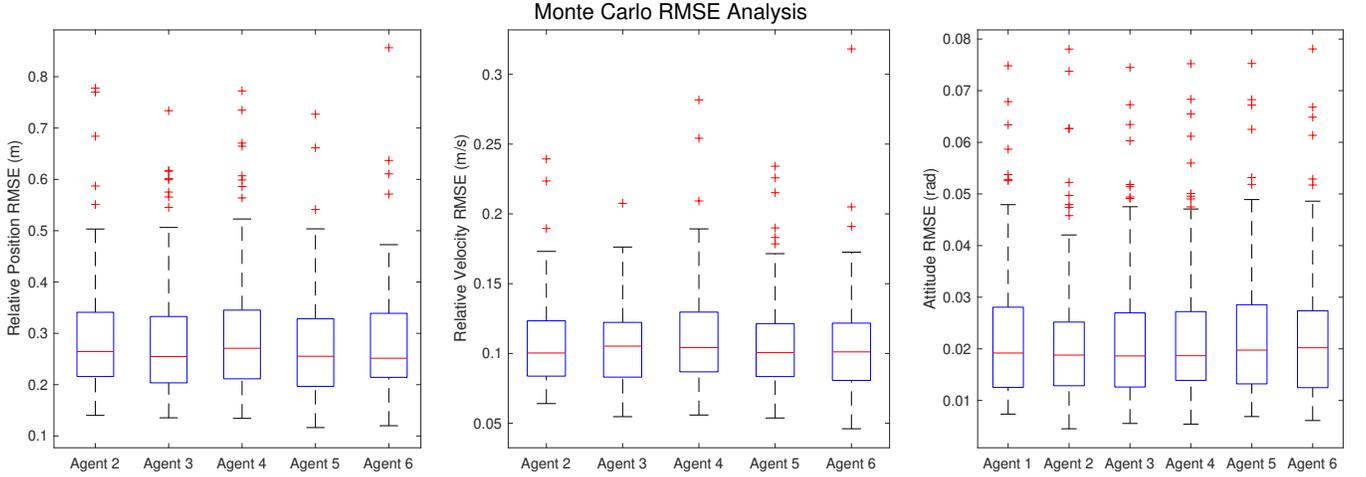}
    \caption{A box plot for the RMSE on 100 Monte Carlo trials. Although there are 3 to 8 outliers in the relative position estimates of the agents, the estimator still achieves an RMSE below 1 m accuracy for all runs, below 0.35 m/s for the velocity estimates, and below 0.08 rad for the attitude estimates.}
    \label{fig:RMSE}
\end{figure*}

Consider 6 fully-connected aerial robots equipped with an IMU and ultra-wideband (UWB) ranging tags, where agents 1, 2, and 3 are two-tag agents, and agents 4, 5, and 6 are single-tag agents. Let the 3 known relative tag positions be 
\begin{align*}
    \mbf{r}_{b_1}^{p_{1,2} p_{1,1}} = \left[\hspace{-3pt} \begin{array}{c}
        0.3  \\
        0 \\
        0
    \end{array} \hspace{-3pt}\right]\hspace{-1pt}, \hspace{2pt} \mbf{r}_{b_2}^{p_{2,2} p_{2,1}} = \left[\hspace{-3pt} \begin{array}{c}
        0  \\
        0.3\\
        0
    \end{array} \hspace{-3pt}\right]\hspace{-1pt}, \hspace{2pt} \mbf{r}_{b_3}^{p_{3,2} p_{3,1}} = \left[\hspace{-3pt} \begin{array}{c}
        0  \\
        0 \\
        0.3
    \end{array} \hspace{-3pt}\right]\hspace{-1pt},
\end{align*}
where all values are given in metres. Additionally, let Agent $1$ be the elected \emph{reference agent}, where a reference agent is specified similarly to a reference tag in Section \ref{sec:suff_cond}. In this section, the developed framework is evaluated by fusing the range measurements with an IMU using an MEKF to find the position of agents 2-6 relative to the reference agent as they move in 3-dimensional space. The centralized state estimator discussed in Section \ref{subsec:models} is assumed to be on Agent 1. The simulation parameters are given in Table \ref{tab:sim_params}.

To assess the performance of the MEKF with the two-tag framework, 100 Monte Carlo trials with different initial conditions and noise realizations are performed, and the corresponding root-mean-squared-error (RMSE) on the relative position, relative velocity, and attitude states are shown in Fig. \ref{fig:RMSE}. When considering all 100 runs, an average RMSE of 0.2887 m, 0.1080 m/s, and 1.306$^\circ$ for the position, velocity, and attitude states respectively are achieved. A normalized estimation error squared (NEES) test \cite[Section 5.4]{barshalom2002} is performed as shown in Fig. \ref{fig:NEES} to verify the consistency of the estimator.

\begin{figure}
    \centering
    \includegraphics[trim=0.0cm 0.0cm 0.25cm 0cm, clip=true,width=0.9\columnwidth]{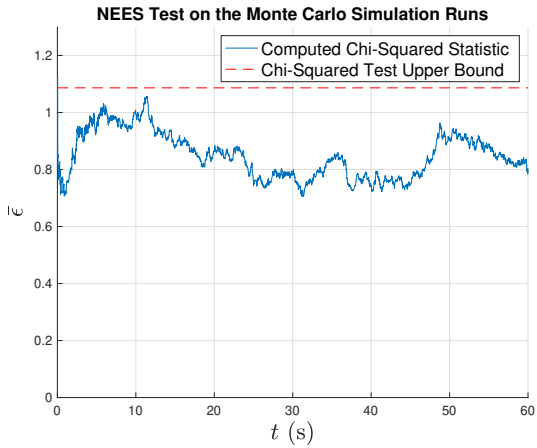}
    \caption{A plot representing the NEES test. The computed chi-squared statistic is below the upper bound, indicating the estimator is consistent.}
    \label{fig:NEES}
\end{figure}

\section{Experimental Results} \label{sec:exp}

\begin{figure}
	\centering
	\begin{minipage}{0.42\columnwidth}
		\centering
        \includegraphics[trim=0.5cm 0cm 0cm 0cm, clip=true,width=\textwidth, ]{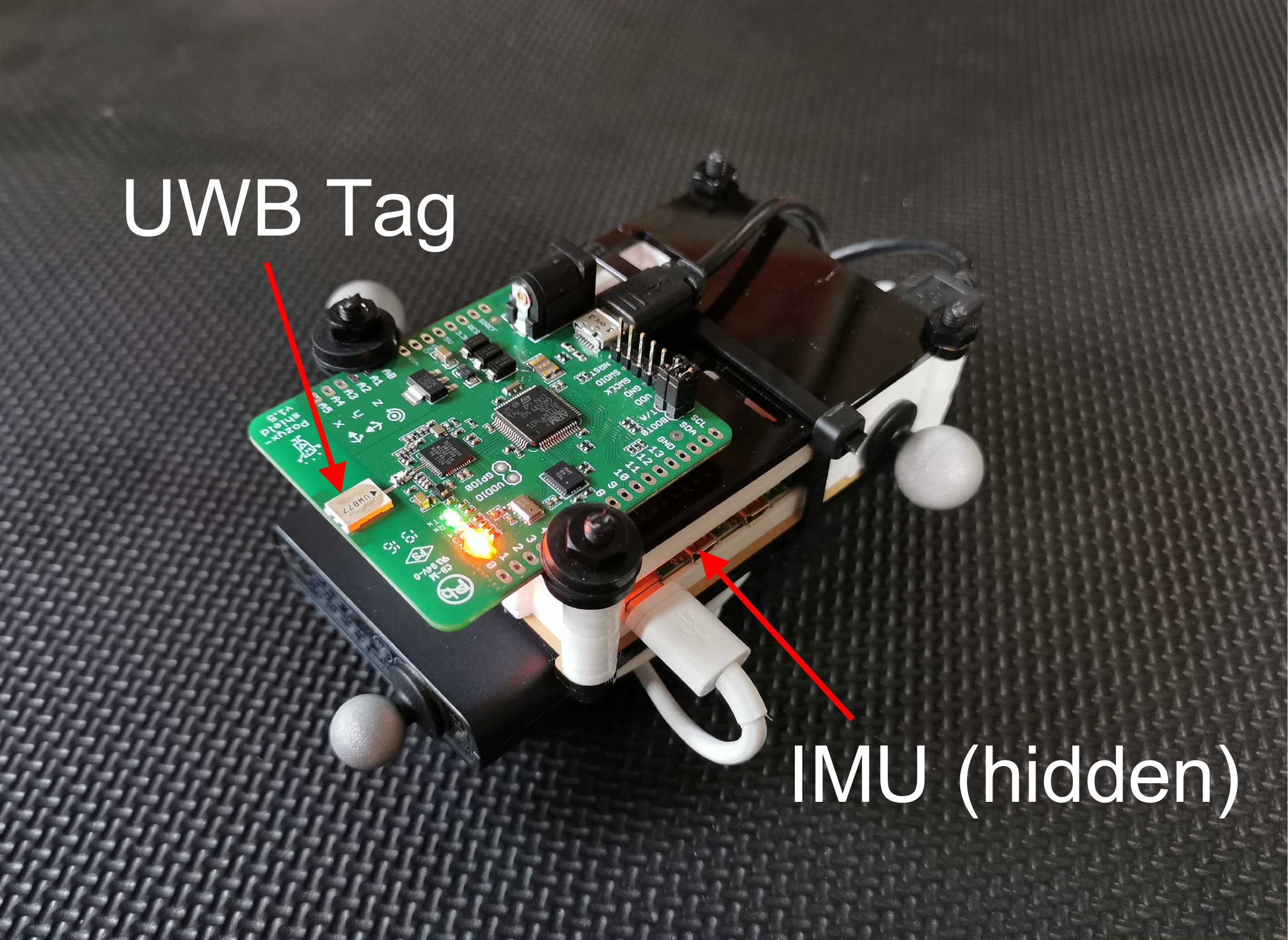}
	\end{minipage} \hspace{1pt}
	\begin{minipage}{0.54\columnwidth}
		\centering
        \includegraphics[width=\textwidth]{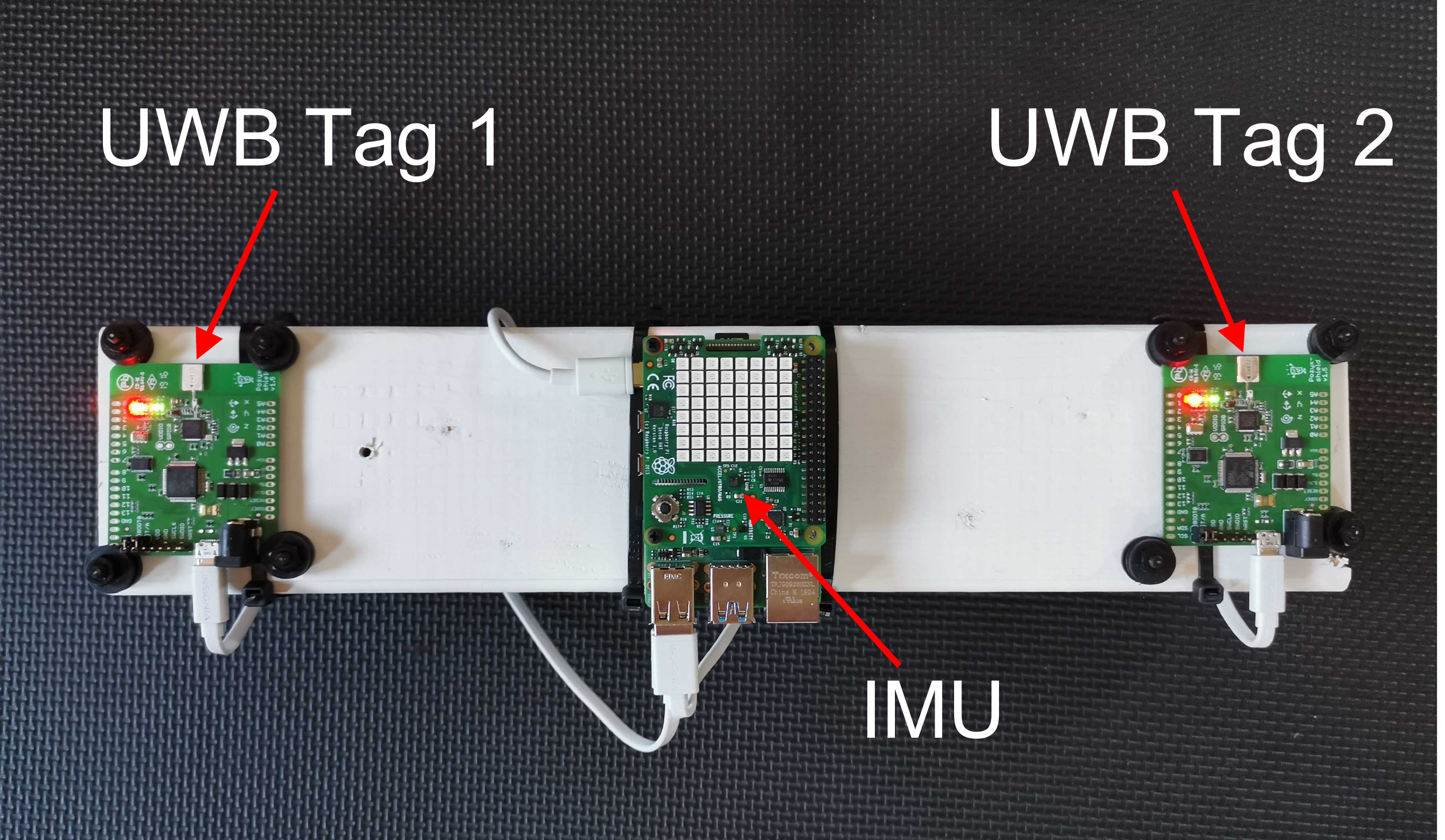}
	\end{minipage}
	\caption{The experimental set-up, showing a single-tag agent (left) and a two-tag agent (right).}
        \label{fig:exp_setup}
\end{figure}

Experimental data is collected for a set-up with two two-tag agents, and a single-tag agent, in a fully-connected structure. The prototypes of a single-tag agent and a two-tag agent are shown in Fig. \ref{fig:exp_setup}. The two two-tag agents are set to be Agent 1 and Agent 2, and Agent~1 is set to be the reference agent, with
\begin{align*}
    \mbf{r}_{b_1}^{p_{1,2} p_{1,1}} &= \left[ \begin{array}{c}
        -0.0067 \\
        0.3172 \\
        -0.0185
    \end{array} \right], \quad \mbf{r}_{b_1}^{p_{2,2} p_{2,1}} = \left[ \begin{array}{c}
        0.0043 \\
        0.3213 \\
        0.0224
    \end{array} \right],
\end{align*}
where all the values are in metres. The IMU data is collected at 240 Hz using a Raspberry Pi Sense HAT device, and Pozyx UWB Developer Tags are used for ranging. Only one frequency channel is used at a communication rate of 16 Hz; therefore, each range measurement is collected at a frequency of only 2 Hz. Additionally, ground truth position and attitude measurements are collected at 120 Hz using an OptiTrack optical motion capture system.

The data is collected by moving all three agents indoors in random 3-dimensional rotational and translational motion, in a volume of approximately 5 m $\times$ 4 m $\times$ 2 m. The magnetometers are affected both by perturbations from the surroundings and the other agents, making estimation, especially attitude estimation, more difficult. Despite that, and with such a low ranging frequency and an inexpensive IMU, a relative position RMSE of 0.4890 m is achieved for Agent 2, and 0.42813 m for Agent 3, with the error and $\pm 3 \sigma$ confidence bounds plotted in Fig. \ref{fig:SR_EX}. This asserts the potential of the two-tag framework on indoor self-localization without the need for expensive hardware or computationally expensive algorithms, such as visual odometry.

On flying quadcopters, vibrations affecting the IMU readings might result in worse relative position estimates. Additionally, a larger volume might degrade the performance of the algorithm as the measurements to the two tags from another agent become less geometrically distinct. However, by implementing more than just 3 agents and/or by increasing the distance between the two tags of the two-tag agents, the performance of the estimator improves and might compensate for worse attitude estimates or larger distance between the agents.

\begin{figure}
	\centering
	\begin{minipage}{\columnwidth}
		\centering
		\includegraphics[trim=1cm 0cm 1cm 0cm, clip=true,width=0.9\textwidth]{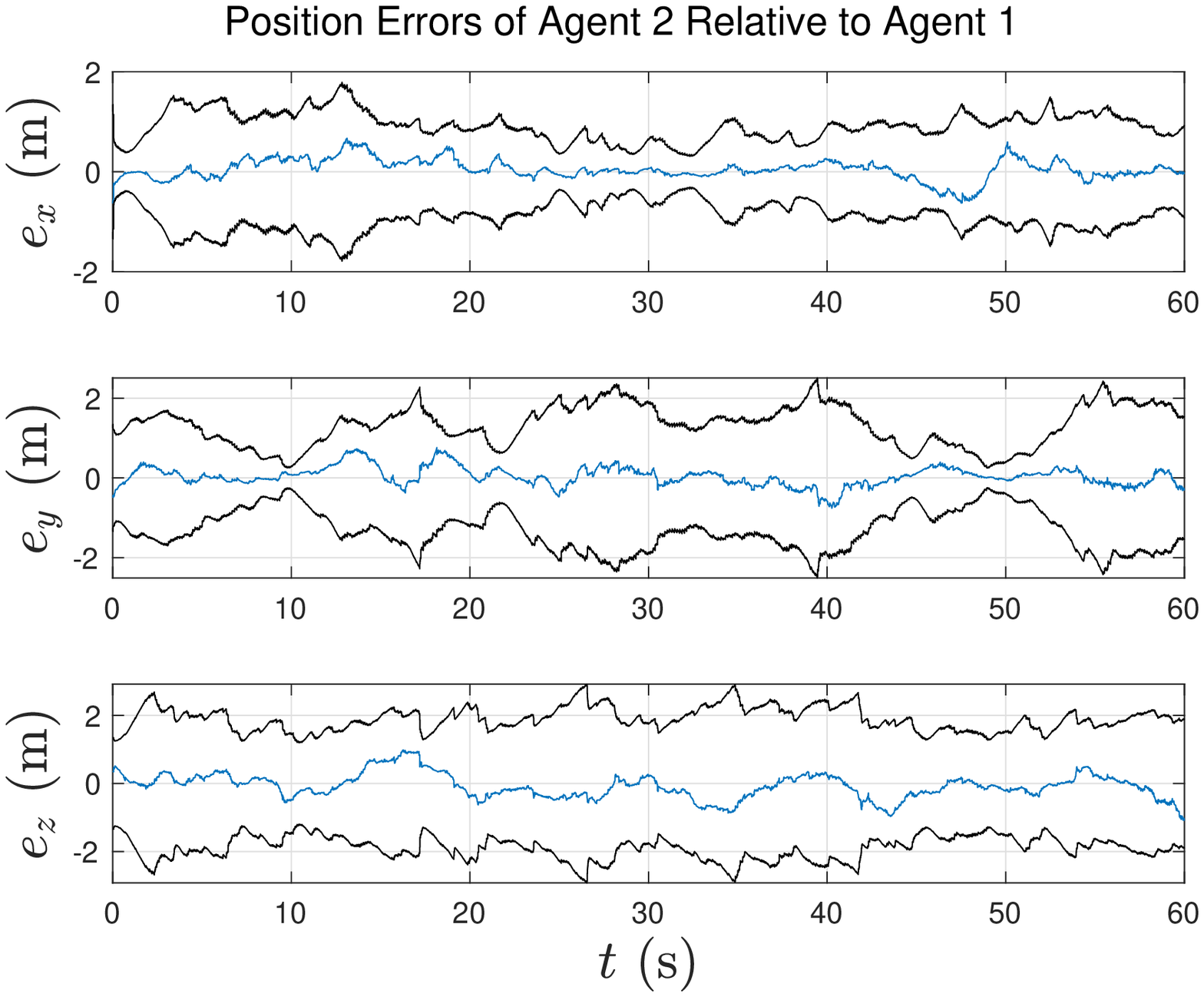}
	\end{minipage}
	\begin{minipage}{\columnwidth}
		\centering
		\includegraphics[trim=1cm 0cm 1cm 0cm, clip=true,width=0.9\textwidth]{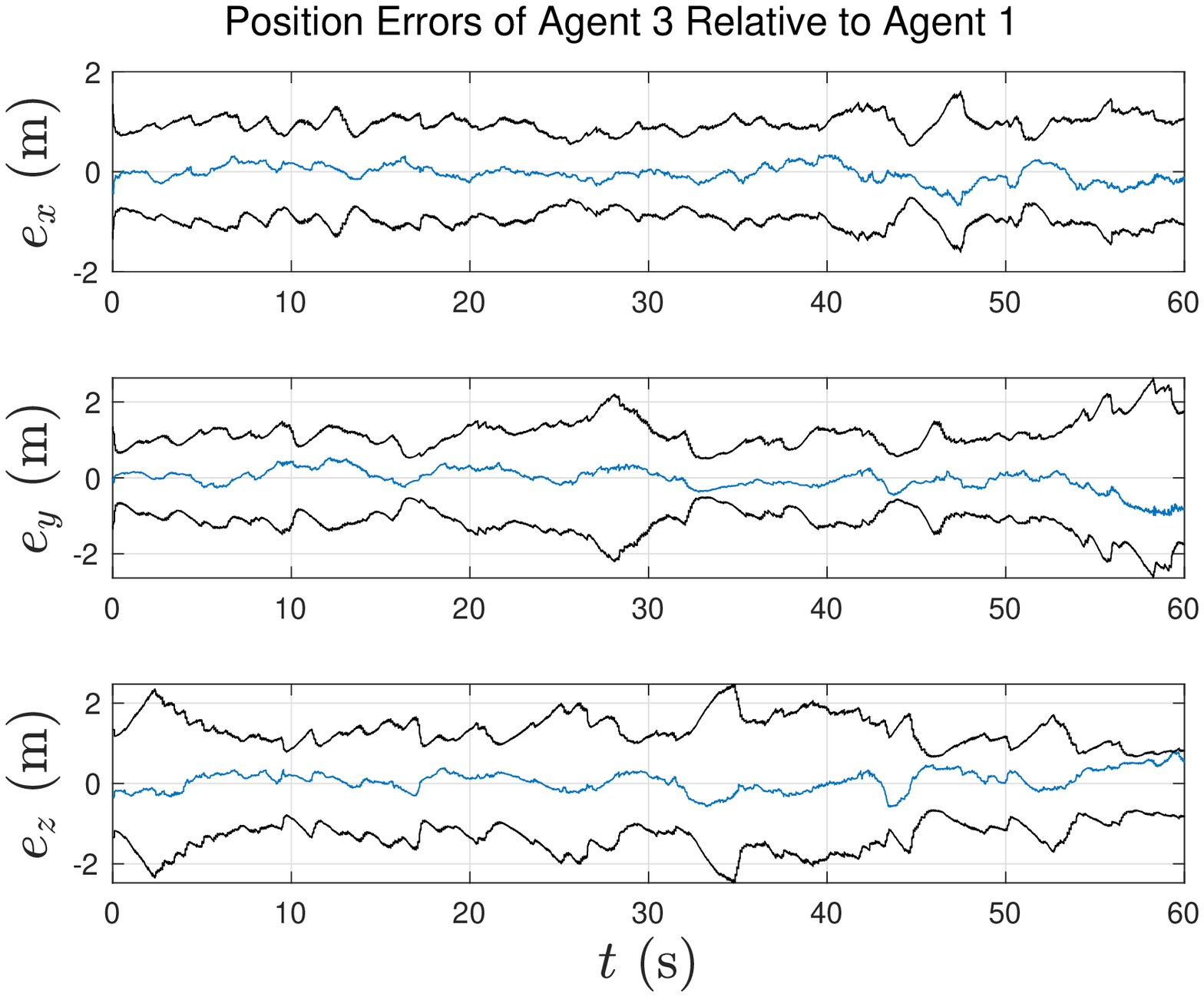}
	\end{minipage}
	\caption{The performance of the relative position estimator on experimental data for both the two-tag Agent 2 and the single-tag Agent 3, where the black lines represent the $\pm3\sigma$ bound of the estimator.}
	\label{fig:SR_EX}
\end{figure}

\section{Conclusion and Future Work} \label{sec:conclusion}

In this letter, the problem of three-dimensional relative position estimation using range measurements is addressed. The first step involves deriving a sufficient condition such that the relative position states of the agents are instantaneously locally observable. Thereafter, a framework utilizing two-tag agents is developed, which exploits attitude information to satisfy the sufficient conditions for observability. Lastly, this framework is integrated with an IMU using an MEKF and is tested in simulation and in experiments. The results show that around 40-50 cm relative positioning accuracy is achievable, using just an IMU and range measurements with three agents equipped with inexpensive sensors. Future work includes evaluating the two-tag framework on many agents in a decentralized structure.

\ifCLASSOPTIONcaptionsoff
  \newpage
\fi



\bibliographystyle{IEEEtran}
\bibliography{IEEEabrv,IEEEbib}

\end{document}